\newcommand{\newAlgoLine}[1]{ {\color{BrickRed} #1} }	
\DeclareMathOperator*{\argmin}{arg\,min}
\newcommand{\expect}[1]{E\left[#1\right]}
\DeclareMathOperator*{\diag}{diag}
\newcommand{\norm}[2]{\left|\left| #1 \right|\right|_{#2}}
\newcommand{\bbm}{\begin{bmatrix}}
\newcommand{\ebm}{\end{bmatrix}}
\newcommand{\statex}[0]{\mathbf{x}}
\newcommand{\namedState}[1]{\statex_{#1}}
\newcommand{\stateSet}[0]{X}
\newcommand{\namedSet}[1]{\stateSet_{#1}}
\newcommand{\vertexSet}[0]{V}
\newcommand{\edgeSet}[0]{E}
\newcommand{\treeGraph}[0]{\mathcal{T}}
\newcommand{\pathSeq}[0]{\sigma}
\newcommand{\pathSet}[0]{\Sigma}
\newcommand{\cost}[0]{c}
\newcommand{\namedCost}[1]{\cost_{#1}}
\newcommand{\axis}[0]{\mathbf{a}}
\newcommand{\radius}[0]{r}
\newcommand{\fTrue}[1]{f\left(#1\right)}
\newcommand{\gTrue}[1]{g\left(#1\right)}
\newcommand{\hTrue}[1]{h\left(#1\right)}
\newcommand{\fHat}[1]{\widehat{f}\left(#1\right)}
\newcommand{\gHat}[1]{\widehat{g}\left(#1\right)}
\newcommand{\hHat}[1]{\widehat{h}\left(#1\right)}
\newcommand{\uniform}[1]{\mathcal{U}\left(#1\right)}
\newcommand{\lebesgueSymb}[0]{\lambda}
\newcommand{\lebesgue}[1]{\lebesgueSymb\left(#1\right)}
\newcommand{\xstart}[0]{\namedState{\rm start}}
\newcommand{\xrand}[0]{\namedState{\rm rand}}
\newcommand{\xnew}[0]{\namedState{\rm new}}
\newcommand{\xnear}[0]{\namedState{\rm near}}
\newcommand{\xnearest}[0]{\namedState{\rm nearest}}
\newcommand{\xparent}[0]{\namedState{\rm parent}}
\newcommand{\xmin}[0]{\namedState{\rm min}}
\newcommand{\xsoln}[0]{\namedState{\rm soln}}
\newcommand{\xgoal}[0]{\namedState{\rm goal}}
\newcommand{\xto}[0]{\namedState{\rm to}}
\newcommand{\xfrom}[0]{\namedState{\rm from}}
\newcommand{\xcentre}[0]{\namedState{\rm centre}}
\newcommand{\xFirstFoci}[0]{\namedState{f1}}
\newcommand{\xSecondFoci}[0]{\namedState{f2}}
\newcommand{\xball}[0]{\namedState{\rm ball}}
\newcommand{\xellipse}[0]{\namedState{\rm ellipse}}
\newcommand{\xfhat}[0]{\namedState{\widehat{f}}}
\newcommand{\cmin}[0]{\namedCost{\rm min}}
\newcommand{\cnew}[0]{\namedCost{\rm new}}
\newcommand{\cnear}[0]{\namedCost{\rm near}}
\newcommand{\cmax}[0]{\namedCost{\rm max}}
\newcommand{\cbest}[0]{\namedCost{\rm best}}
\newcommand{\cideal}[0]{\cost^{*}}
\newcommand{\obsSet}[0]{\namedSet{\rm obs}}
\newcommand{\freeSet}[0]{\namedSet{\rm free}}
\newcommand{\goalSet}[0]{\namedSet{\rm goal}}
\newcommand{\solnSet}[0]{\namedSet{\rm soln}}
\newcommand{\nearSet}[0]{\namedSet{\rm near}}
\newcommand{\sampleSet}[0]{\namedSet{\rm s}}
\newcommand{\fSet}[0]{\namedSet{f}}
\newcommand{\fhatSet}[0]{\namedSet{\widehat{f}}}
\newcommand{\ballSet}[0]{\namedSet{\rm ball}}
\newcommand{\ellipseSet}[0]{\namedSet{\rm ellipse}}
\newcommand{\bestPath}[0]{\pathSeq^{*}}
\newtheorem{thm}{Theorem}
\newtheorem{rem}{Remark}
\newcommand{\UTIAStitle}{Informed RRT*: Optimal Sampling-based Path Planning Focused via Direct Sampling of an Admissible Ellipsoidal Heuristic}
\title{\LARGE \bf \UTIAStitle}
\author{Jonathan D.\ Gammell$^1$, Siddhartha S.\ Srinivasa$^2$, and Timothy D.\ Barfoot$^1$%
\thanks{$^1$ J.\ D.\ Gammell and T.\ D.\ Barfoot are with the Autonomous Space Robotics Lab at the
University of Toronto Institute for Aerospace Studies, Toronto, Ontario, Canada.
Email: \texttt{\{jon.gammell, tim.barfoot\}@utoronto.ca}}
\thanks{$^2$ S.\ S.\ Srinivasa is with The Robotics Institute, Carnegie Mellon University, Pittsburgh, Pennsylvania, USA. Email: \texttt{siddh@cs.cmu.edu}}
}
\newcommand{\algorithmStyle}[0]{\footnotesize} %
\newcommand{\bodySpacing}[0]{0.965} %
\begin{document}
\begin{acronym}[UTIAS]

	\acro{ASRL}{Autonomous Space Robotics Lab}
	\acro{CSA}{Canadian Space Agency}
	\acro{DRDC}{Defence Research and Development Canada}
	\acro{KSR}{Koffler Scientific Reserve at Jokers Hill}
	\acro{MET}{Mars Emulation Terrain}
	\acro{MIT}{Massachusetts Institute of Technology}
	\acro{NASA}{National Aeronautics and Space Administration}
	\acro{NSERC}{Natural Sciences and Engineering Research Council of Canada}
	\acro{NCFRN}{\acs{NSERC} Canadian Field Robotics Network}
	\acro{NORCAT}{Northern Centre for Advanced Technology Inc.}
	\acro{ODG}{Ontario Drive and Gear Ltd.}
	\acro{ONR}{Office of Naval Research}
	\acro{USSR}{Union of Soviet Socialist Republics}
	\acro{UofT}{University of Toronto}
	\acro{UW}{University of Waterloo}
	\acro{UTIAS}{University of Toronto Institute for Aerospace Studies}

	\acro{ACPI}{advanced configuration and power interface}
	\acro{CLI}{command-line interface}
	\acro{GUI}{graphical user interface}
	\acro{LAN}{local area network}
	\acro{MFC}{Microsoft foundation class}
	\acro{NIC}{network interface card}
	\acro{SDK}{software development kit}
	\acro{HDD}{hard-disk drive}
	\acro{SSD}{solid-state drive}

	\acro{IROS}{IEEE/RSJ International Conference on Intelligent Robots and Systems}

	\acro{DOF}{degree-of-freedom}
		\acrodefplural{DoF}[DoFs]{degrees-of-freedom} %

		\acro{FOV}{field of view}
			\acrodefplural{FOV}[FOVs]{fields of view}
		\acro{HDOP}{horizontal dilution of position}
		\acro{UTM}{universal transverse mercator}
		\acro{WAAS}{wide area augmentation system}
		\acro{AHRS}{attitude heading reference system}
		\acro{DAQ}{data acquisition}
		\acro{DGPS}{differential global positioning system}
		\acro{DPDT}{double-pole, double-throw}
		\acro{DPST}{double-pole, single-throw}
		\acro{GPR}{ground penetrating radar}
		\acro{GPS}{global positioning system}
		\acro{LED}{light-emitting diode}
		\acro{IMU}{inertial measurement system}
		\acro{PTU}{pan-tilt unit}
		\acro{RTK}{real-time kinematic}
		\acro{R/C}{radio control}
		\acro{SCADA}{supervisory control and data acquisition}
		\acro{SPST}{single-pole, single-throw}
		\acro{SPDT}{single-pole, double-throw}
		\acro{UWB}{ultra-wide band}

	\acro{DDS}{Departmental Doctoral Seminar}
	\acro{DEC}{Doctoral Examination Committee}
	\acro{FOE}{Final Oral Exam}
	\acro{ICD}{Interface Control Document}

	\acro{iid}[i.i.d.]{independent and identically distributed}

	\acro{EKF}{extended Kalman filter}
	\acro{iSAM}{incremental smoothing and mapping}
	\acro{ISRU}{in-situ resource utilization}
	\acro{PCA}{principle component analysis}
	\acro{SLAM}{simultaneous localization and mapping}
	\acro{SVD}{singular value decomposition}
	\acro{UKF}{unscented Kalman filter}
	\acro{VO}{visual odometry}
	\acro{VTR}[VT\&R]{visual teach and repeat}
		\acro{BITstar}[BIT*]{Batch Informed Trees}
		\acro{BRM}{belief roadmap}
		\acro{EST}{Expansive Space Tree}
    	\acro{FMT}[FMT*]{fast marching tree}
		\acro{LQG-MP}{linear-quadratic Gaussian motion planning}
		\acro{LPAstar}[LPA*]{lifelong planning A*}
		\acro{MDP}{Markov decision process}
			\acrodefplural{MDP}[MDPs]{Markov decision processes}
		\acro{NRP}{network of reusable paths}
			\acrodefplural{NRP}[NRPs]{networks of reusable paths}
		\acro{POMDP}{partially-observable Markov decision process}
			\acrodefplural{POMDP}[POMDPs]{partially-observable Markov decision processes}
		\acro{PRM}{Probabilistic Roadmap}		
		\acro{PRMstar}[PRM*]{optimal \acp{PRM}}
		\acro{RRG}{Rapidly-exploring Random Graph}
		\acro{RRM}{Rapidly-exploring Roadmap}		
		\acro{RRT}{Rapidly-exploring Random Tree}
			\acro{hRRT}{Heuristically Guided \acs{RRT}}
		\acro{RRTstar}[RRT*]{optimal \acs{RRT}}
		\acro{RRTehstar}[RRTeh*]{\ac{RRTstar} with ellipsoidal heuristics}
		\acro{RRBT}{rapidly-exploring random belief tree}

	\acro{MER}{Mars Exploration Rover}
	\acro{MSL}{Mars Science Laboratory}
    \acro{OMPL}{open motion planning library}
	\acro{ROS}{Robot Operating System}

\end{acronym} %

\maketitle
\thispagestyle{empty}
\pagestyle{empty}

\setlength\parskip{0ex plus0.1ex minus0.1ex}

\makeatletter
\renewcommand\section{\@startsection{section}{1}{\z@}%
{1.5ex plus 1ex minus 0ex}%
{0.7ex plus 0.5ex minus 0ex}%
{\normalfont\normalsize\centering\scshape}}%
\makeatother

\setlength \abovedisplayskip{1ex plus0pt minus1pt}
\setlength \belowdisplayskip{1ex plus0pt minus1pt}

\setlength{\skip\footins}{0.5\baselineskip  plus 0.0\baselineskip  minus 0.2\baselineskip} %

\setlength\floatsep{0.9\baselineskip plus0pt minus0.2\baselineskip}                     %
\setlength\textfloatsep{0.2\baselineskip plus0pt minus0.4\baselineskip}                 %
\setlength\abovecaptionskip{0.pt plus0pt minus0pt}                                      %

\begin{spacing}{\bodySpacing}%
\begin{abstract}%
Rapidly-exploring random trees (\acsp{RRT}) are popular in motion planning because they find solutions efficiently to single-query problems.\acused{RRT}
Optimal \acsp{RRT} (\acsp{RRTstar}) extend \acp{RRT} to the problem of finding the optimal solution, but in doing so asymptotically find the optimal path from the initial state to \emph{every} state in the planning domain.\acused{RRTstar}
This behaviour is not only inefficient but also inconsistent with their single-query nature.

For problems seeking to minimize path length, the subset of states that can improve a solution can be described by a prolate hyperspheroid.
We show that unless this subset is sampled directly, the probability of improving a solution becomes arbitrarily small in large worlds or high state dimensions.
In this paper, we present an exact method to focus the search by directly sampling this subset.

The advantages of the presented sampling technique are demonstrated with a new algorithm, \emph{Informed} \acs{RRTstar}.
This method retains the same probabilistic guarantees on completeness and optimality as \ac{RRTstar} while improving the convergence rate and final solution quality.
We present the algorithm as a simple modification to \ac{RRTstar} that could be further extended by more advanced path-planning algorithms.
We show experimentally that it outperforms \ac{RRTstar} in rate of convergence, final solution cost, and ability to find difficult passages while demonstrating less dependence on the state dimension and range of the planning problem.
\end{abstract}%
\acresetall %

\section{Introduction}\label{sec:intro}%
The motion-planning problem is commonly solved by first discretizing the continuous state space with either a grid for graph-based searches or through random sampling for stochastic incremental searches.
Graph-based searches, such as A* \cite{hart_tssc68}, are often \emph{resolution complete} and \emph{resolution optimal}.
They are guaranteed to find the optimal solution, if a solution exists, and return failure otherwise (up to the resolution of the discretization).
These graph-based algorithms do not scale well with problem size (e.g., state dimension or problem range).%

Stochastic searches, such as \acp{RRT} \cite{lavalle_ijrr01}, \acp{PRM} \cite{kavraki_tro96}, and \acp{EST} \cite{hsu_ijrr02}, use sampling-based methods to avoid requiring a discretization of the state space.
This allows them to scale more effectively with problem size and to directly consider kinodynamic constraints; however, the result is a less-strict completeness guarantee.
\acp{RRT} are \emph{probabilistically complete}, guaranteeing that the probability of finding a solution, if one exists, approaches unity as the number of iterations approaches infinity.

Until recently, these sampling-based algorithms made no claims about the optimality of the solution.
Urmson and Simmons \cite{urmson_iros03} had found that using a heuristic to bias sampling improved \ac{RRT} solutions, but did not formally quantify the effects.
Ferguson and Stentz \cite{ferguson_iros06} recognized that the length of a solution bounds the possible improvements from above, and demonstrated an iterative anytime \ac{RRT} method to solve a series of subsequently smaller planning problems.
Karaman and Frazzoli \cite{karaman_ijrr11} later showed that \acp{RRT} return a suboptimal path with probability one, demonstrating that all \acs{RRT}-based methods will almost surely be suboptimal and presented a new class of optimal planners.
They named their optimal variants of \acp{RRT} and \acp{PRM}, \acs{RRTstar} and \acs{PRMstar}, respectively.\acused{RRTstar}\acused{PRMstar} 
These algorithms are shown to be \emph{asymptotically optimal}, with the probability of finding the optimal solution approaching unity as the number of iterations approaches infinity.

\begin{figure}[t]%
	\centering
	\includegraphics[width=\columnwidth]{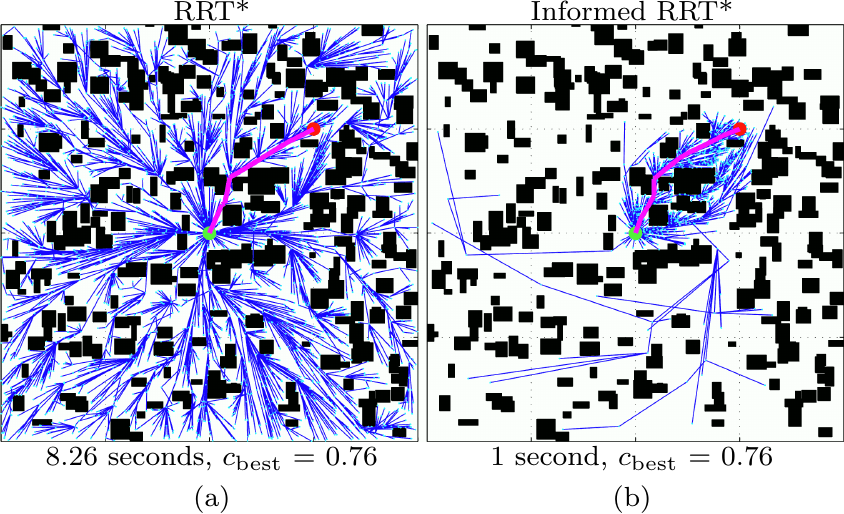}
	\caption{Solutions of equivalent cost found by \acs{RRTstar} and Informed \acs{RRTstar} on a random world. After an initial solution is found, Informed \acs{RRTstar} focuses the search on an ellipsoidal informed subset of the state space, $\fhatSet \subseteq \stateSet$, that contains all the states that can improve the current solution regardless of homotopy class. This allows Informed \acs{RRTstar} to find a better solution faster than \acs{RRTstar} without requiring any additional user-tuned parameters.}
	\label{fig:randomWorld2}
\end{figure}%
\begin{figure*}[t]%
	\centering
	\includegraphics[width=\textwidth]{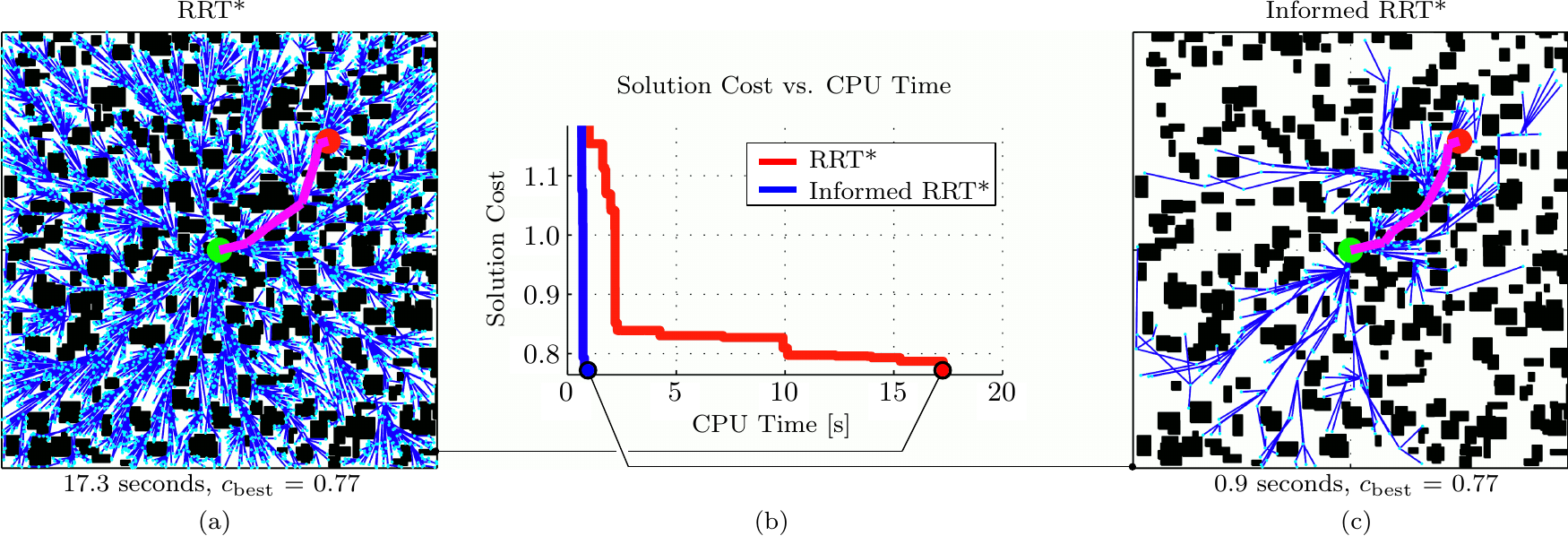}
	\caption{The solution cost versus computational time for \acs{RRTstar} and Informed \acs{RRTstar} on a random world problem. Both planners were run until they found a solution of the same cost. Figs.~(a, c) show the final result, while Fig.~(b) shows the solution cost versus computational time. From Fig.~(a), it can be observed that \acs{RRTstar} spends significant computational resources exploring regions of the planning problem that cannot possibly improve the current solution, while Fig.~(c) demonstrates how Informed \acs{RRTstar} focuses the search.
    \vspace{-1.5\baselineskip}. }

	\label{fig:randomWorld}
\end{figure*}%

\acp{RRT} are not asymptotically optimal because the existing state graph biases future expansion.
\ac{RRTstar} overcomes this by introducing incremental rewiring of the graph.
New states are not only added to a tree, but also considered as replacement parents for existing nearby states in the tree.
With uniform global sampling, this results in an algorithm that asymptotically finds the optimal solution to the planning problem by \emph{asymptotically finding the optimal paths from the initial state to every state in the problem domain}.
This is inconsistent with their single-query nature and becomes expensive in high dimensions.

In this paper, we present the \emph{focused} optimal planning problem as it relates to the minimization of path length in $\mathbb{R}^n$.
For such problems, a necessary condition to improve the solution at any iteration is the addition of states from an ellipsoidal subset of the planning domain \cite{ferguson_iros06}, \cite{gabriely_tro08,gasilov_iete11,otte_tro13}.
We show that the probability of adding such states through uniform sampling becomes arbitrarily small as the size of the planning problem increases or the solution approaches the theoretical minimum, and present an exact method to sample the ellipsoidal subset directly.
It is also shown that with strict assumptions (i.e., no obstacles) that this direct sampling results in linear convergence to the optimal solution.

This direct-sampling method allows for the creation of informed-sampling planners.
Such a planner, Informed \acs{RRTstar}, is presented to demonstrate the advantages of \emph{informed} incremental search (Fig.~\ref{fig:randomWorld2}).
Informed \acs{RRTstar} behaves as \ac{RRTstar} until a first solution is found, after which it only samples from the subset of states defined by an admissible heuristic to possibly improve the solution.
This subset implicitly balances exploitation versus exploration and requires no additional tuning (i.e., there are no additional parameters) or assumptions (i.e., all relevant homotopy classes are searched).
While heuristics may not always improve the search, their prominence in real-world planning demonstrates their practicality.
In situations where they provide no additional information (e.g., when the informed subset includes the entire planning problem), Informed \acs{RRTstar} is equivalent to \ac{RRTstar}.

Informed \acs{RRTstar} is a simple modification to \ac{RRTstar} that demonstrates a clear improvement.
In simulation, it performs as well as existing \ac{RRTstar} algorithms on simple configurations, and demonstrates order-of-magnitude improvements as the configurations become more difficult (Fig.~\ref{fig:randomWorld}).
As a result of its focused search, the algorithm has less dependence on the dimension and domain of the planning problem as well as the ability to find better topologically distinct paths sooner.
It is also capable of finding solutions within tighter tolerances of the optimum than \ac{RRTstar} with equivalent computation, and in the absence of obstacles can find the optimal solution to within machine zero in finite time (Fig.~\ref{fig:machineZero}).
It could also be used in combination with other algorithms, such as path-smoothing, to further reduce the search space.

The remainder of this paper is organized as follows.
Section~\ref{sec:back} presents a formal definition of the focused optimal planning problem and reviews the existing literature.
Section~\ref{sec:ellipse} presents a closed-form estimate of the subset of states that can improve a solution for problems seeking to minimize path length in $\mathbb{R}^n$ and analyzes the implications on \ac{RRTstar}-style algorithms.
Section~\ref{sec:sample} presents a method to sample this subset directly.
Section~\ref{sec:algorithm} presents the Informed \acs{RRTstar} algorithm and Section~\ref{sec:sim} presents simulation results comparing \ac{RRTstar} and Informed \acs{RRTstar} on simple planning problems of various size and configuration and random problems of various dimension.
Section~\ref{sec:end} concludes the paper with a discussion of the technique and some related ongoing work.

\begin{figure*}[t]%
	\centering
	\includegraphics[scale=1]{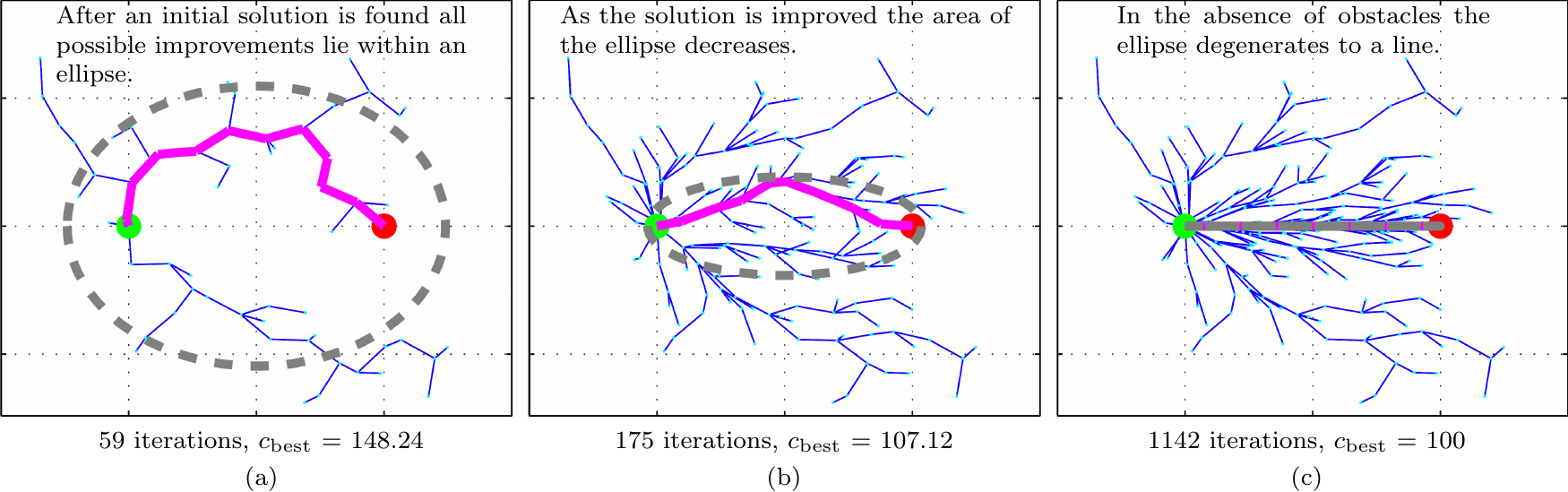}
	\caption{Informed \acs{RRTstar} converging to within machine zero of the optimum in the absence of obstacles. The start and goal states are shown as green and red, respectively, and are 100 units apart. The current solution is highlighted in magenta, and the ellipsoidal sampling domain, $\fhatSet$, is shown as a grey dashed line for illustration. Improving the solution decreases the size of the sampling domain, creating a feedback effect that converges to within machine zero of the theoretical minimum. Fig.~(a) shows the first solution at 59 iterations, (b) after 175 iterations, and (c), the final solution after 1142 iterations, at which point the ellipse has degenerated to a line between the start and goal.\vspace{-1.5\baselineskip}}
	\label{fig:machineZero}
\end{figure*}%
\section{Background}\label{sec:back}%
\subsection{Problem Definition}\label{sec:back:defn}%
We define the optimal planning problem similarly to \cite{karaman_ijrr11}.
Let $\stateSet \subseteq \mathbb{R}^n$ be the state space of the planning problem. Let $\obsSet \subsetneq \stateSet$ be the states in collision with obstacles and $\freeSet = \stateSet \setminus \obsSet$ be the resulting set of permissible states.
Let $\xstart \in \freeSet$ be the initial state and $\xgoal \in \freeSet$ be the desired final state.
Let $\pathSeq : \; \left[0,1\right] \mapsto \stateSet$ be a sequence of states (a path) and $\pathSet$ be the set of all nontrivial paths.

The optimal planning problem is then formally defined as the search for the path, $\bestPath$, that minimizes a given cost function, $\cost : \; \pathSet \mapsto \mathbb{R}_{\geq0}$, while connecting $\xstart$ to $\xgoal$ through free space,
\begin{align*}
\bestPath = \argmin_{\pathSeq \in \pathSet}\left\lbrace \cost\left(\pathSeq \right) \;\; \middle| \;\; \right. & \pathSeq(0) = \xstart,\, \pathSeq(1) = \xgoal,\\
& \;\;\; \left. \forall s \in \left[ 0,1 \right],\, \pathSeq\left(s\right) \in \freeSet \right\rbrace,
\end{align*}%
where $\mathbb{R}_{\geq0}$ is the set of non-negative real numbers.

Let $\fTrue{\statex}$ be the cost of an optimal path from $\xstart$ to $\xgoal$ constrained to pass through $\statex$.
Then the subset of states that can improve the current solution, $\fSet \subseteq \stateSet$, can be expressed in terms of the current solution cost, $\cbest$,
\begin{align}\label{eqn:fset}
	\fSet = \left\lbrace \statex \in \stateSet \;\; \middle| \;\; \fTrue{\statex} < \cbest \right\rbrace.
\end{align}%
The problem of focusing \ac{RRTstar}'s search in order to increase the convergence rate is equivalent to increasing the probability of adding a random state from $\fSet$.

As $\fTrue{\cdot}$ is generally unknown, a heuristic function, $\fHat{\cdot}$, may be used as an estimate.
This heuristic is referred to as \emph{admissible} if it never overestimates the true cost of the path, i.e., $\forall \statex \in \stateSet, \; \fHat{\statex} \leq \fTrue{\statex}$.
An estimate of $\fSet$, $\fhatSet$, can then be defined analogously to \eqref{eqn:fset}.
For admissible heuristics, this estimate is guaranteed to completely contain the true set, $\fhatSet \supseteq \fSet$, and thus inclusion in the estimated set is also a necessary condition to improving the current solution.

\subsection{Prior Work}\label{sec:back:lit}%
Prior work to focus \ac{RRT} and \ac{RRTstar} has relied on sample biasing, heuristic-based sample rejection, heuristic-based graph pruning, and/or iterative searches.

\subsubsection{Sample Biasing}\label{sec:back:list:bias}%
Sample biasing attempts to increase the frequency that states are sampled from $\fSet$ by biasing the distribution of samples drawn from $\stateSet$.
This continues to add states from outside of $\fSet$ that cannot improve the solution.
It also results in a nonuniform density over the problem being searched, violating a key \ac{RRTstar} assumption.

\paragraph{Heuristic-biased Sampling}\label{sec:back:lit:bias:f}%
Heuristic-biased sampling attempts to increase the probability of sampling $\fSet$ by weighting the sampling of $\stateSet$ with a heuristic estimate of each state. %
It is used to improve the quality of a regular \ac{RRT} by Urmson and Simmons \cite{urmson_iros03} in the \ac{hRRT} by selecting states with a probability inversely proportional to their heuristic cost.
The \ac{hRRT} was shown to find better solutions than \ac{RRT}; however, the use of RRTs means that the solution is almost surely suboptimal \cite{karaman_ijrr11}.

Kiesel et al. \cite{kiesel_socs12} use a two-stage process to create an \ac{RRTstar} heuristic in their \emph{f-biasing} technique.
A coarse abstraction of the planning problem is initially solved to provide a heuristic cost for each discrete state.
\ac{RRTstar} then samples new states by randomly selecting a discrete state and sampling inside it with a continuous uniform distribution.
The discrete sampling is biased such that states belonging to the abstracted solution have the highest probability of selection.
This technique provides a heuristic bias for the full duration of the \ac{RRTstar} algorithm; however, to account for the discrete abstraction it maintains a nonzero probability of selecting every state.
As a result, states that cannot improve the current solution are still sampled.

\paragraph{Path Biasing}\label{sec:back:lit:bias:path}%
Path-biased sampling attempts to increase the frequency of sampling $\fSet$ by sampling around the current solution path.
This assumes that the current solution is either homotopic to the optimum or separated only by small obstacles.
As this assumption is not generally true, path-biasing algorithms must also continue to sample globally to avoid local optima.
The ratio of these two sampling methods is frequently a user-tuned parameter.

Alterovitz et al. \cite{alterovitz_icra11} use path biasing to develop the \ac{RRM}.
Once an initial solution is found, each iteration of the \ac{RRM} either samples a new state or selects an existing state from the current solution and refines it.
Path refinement occurs by connecting the selected state to its neighbours resulting in a graph instead of a tree.

Akgun and Stilman \cite{akgun_iros11} use path biasing in their dual-tree version of \ac{RRTstar}.
Once an initial solution is found, the algorithm spends a user-specified percentage of its iterations refining the current solution.
It does this by randomly selecting a state from the solution path and then explicitly sampling from its Voronoi region.
This increases the probability of improving the current path at the expense of exploring other homotopy classes.
Their algorithm also employs sample rejection in exploring the state space (Section~\ref{sec:back:reject}).

Nasir et al. \cite{nasir_ijars13} combine path biasing with smoothing in their \acs{RRTstar}-Smart algorithm.
When a solution is found, \acs{RRTstar}-Smart first smooths and reduces the path to its minimum number of states before using these states as biases for further sampling.
This adds the complexity of a path-smoothing algorithm to the planner while still requiring global sampling to avoid local optima.
While the path smoothing quickly reduces the cost of the current solution, it may also reduce the probability of finding a different homotopy class by removing the number of bias points about which samples are drawn and further violates the \ac{RRTstar} assumption of uniform density.

Kim et al. \cite{kim_icra14} use a visibility analysis to generate an initial bias in their Cloud RRT* algorithm.
This bias is updated~as a solution is found to further concentrate sampling near the~path.

\subsubsection{Heuristic-based Sample Rejection}\label{sec:back:reject}%
Heuristic-based sample rejection attempts to increase the real-time rate of sampling $\fSet$ by using rejection sampling on $\stateSet$ to sample $\fhatSet$.
Samples drawn from a larger distribution are either kept or rejected based on their heuristic value.
Akgun and Stilman \cite{akgun_iros11} use such a technique in their algorithm.
While this is computationally inexpensive for a single iteration, the number of iterations necessary to find a single state in $\fhatSet$ is proportional to its size relative to the sampling domain.
This becomes nontrivial as the solution approaches the theoretical minimum or the planning domain grows.

Otte and Correll \cite{otte_tro13} draw samples from a subset of the planning domain in their parallelized C-FOREST algorithm.
This subset is defined as the hyperrectangle that bounds the prolate hyperspheroidal informed subset.
While this improves the performance of sample rejection, its utility decreases as the dimension of the problem increases (Remark~\ref{rem:rect}).

\subsubsection{Graph Pruning}\label{sec:back:prune}%
Graph pruning attempts to increase the real-time exploration of $\fSet$ by using a heuristic function to limit the graph to $\fhatSet$.
States in the planning graph with a heuristic cost greater than the current solution are periodically removed while global sampling is continued.
The space-filling nature of \acp{RRT} biases the expansion of the pruned graph towards the perimeter of $\fhatSet$.
After the subset is filled, only samples from within $\fhatSet$ itself can add new states to the graph.
In this way, graph pruning becomes a rejection-sampling method after greedily filling the target subset.
As adding a new state to an \ac{RRT} requires a call to a nearest-neighbour algorithm, graph pruning will be more computationally expensive than simple sample rejection while still suffering from the same probabilistic limitations.

Karaman et al. \cite{karaman_icra11} use graph pruning to implement an anytime version of \ac{RRTstar} that improves solutions during execution.
They use the current vertex cost plus a heuristic estimate of the cost from the vertex to the goal to periodically remove states from the tree that cannot improve the current solution.
As \ac{RRTstar} asymptotically approaches the optimal cost of a vertex \emph{from above}, this is an inadmissible heuristic for the cost of a solution through a vertex (Section~\ref{sec:ellipse}).
This can overestimate the heuristic cost of a vertex resulting in erroneous removal, especially early in the algorithm when the tree is coarse.
Jordan and Perez \cite{jordan_tech13} use the same inadmissible heuristic in their bidirectional \ac{RRTstar} algorithm.

Arslan and Tsiotras \cite{arslan_icra13} use a graph structure and \ac{LPAstar} \cite{koenig_ai04} techniques in the \acs{RRT}\# algorithm to prune the existing graph.
Each existing state is given a \ac{LPAstar}-style key that is updated after the addition of each new state.
Only keys that are less than the current best solution are updated, and only up-to-date keys are available for connection with newly drawn samples.

\subsubsection{Anytime \acsp{RRT}}%
Ferguson and Stentz \cite{ferguson_iros06} recognized that a solution bounds the subset of states that can provide further improvement from above.
Their iterative \ac{RRT} method, Anytime \acsp{RRT}, solves a series of independent planning problems whose domains are defined by the previous solution.
They represent these domains as ellipses [\citen{ferguson_iros06}, Fig.~2], but do not discuss how to generate samples.
Restricting the planning domain encourages each \ac{RRT} to find a better solution than the previous; however, to do so they must discard the states already found in $\fhatSet$.

\vspace*{1.5ex}
The algorithm presented in this paper calculates $\fhatSet$ explicitly and samples from it directly.
Unlike path biasing it makes no assumptions about the homotopy class of the optimum and unlike heuristic biasing does not explore states that cannot improve the solution.
As it is based on \ac{RRTstar}, it is able to keep all states found in $\fhatSet$ for the duration of the search, unlike Anytime \acsp{RRT}.
By sampling $\fhatSet$ directly, it always samples potential improvements regardless of the relative size of $\fhatSet$ to $\stateSet$.
This allows it to work effectively regardless of the size of the planning problem or the relative cost of the current solution to the theoretical minimum, unlike sample rejection and graph pruning methods.
In problems where the heuristic does not provide any additional information, it performs identically to \ac{RRTstar}.

\begin{figure}[tb]%
	\centering
	\includegraphics[scale=1.0]{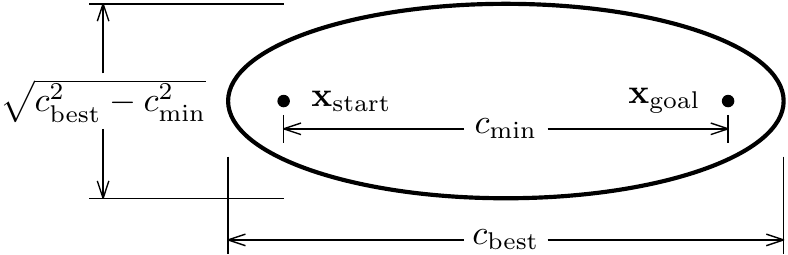}
	\caption{The heuristic sampling domain, $\fhatSet$, for a $\mathbb{R}^2$ problem seeking to minimize path length is an ellipse with the initial state, $\xstart$, and the goal state, $\xgoal$ as focal points. The shape of the ellipse depends on both the initial and goal states, the theoretical minimum cost between the two, $\cmin$, and the cost of the best solution found to date, $\cbest$. The eccentricity of the ellipse is given by $\cmin/\cbest$.}
	\label{fig:ellipse}
\end{figure}%
\section{Analysis of the Ellipsoidal Informed Subset}\label{sec:ellipse}%
Given a positive cost function, the cost of an optimal path from $\xstart$ to $\xgoal$ constrained to pass through $\statex \in \stateSet$, $\fTrue{\statex}$, is equal to the cost of the optimal path from $\xstart$ to $\statex$, $\gTrue{\statex}$, plus the cost of the optimal path from $\statex$ to $\xgoal$, $\hTrue{\statex}$.
As \ac{RRTstar}-based algorithms asymptotically approach the optimal path to every state \emph{from above}, an admissible heuristic estimate, $\fHat{\cdot}$, must estimate both these terms.
A sufficient condition for admissibility is that the components, $\gHat{\cdot}$ and $\hHat{\cdot}$, are individually admissible heuristics of $\gTrue{\cdot}$ and $\hTrue{\cdot}$, respectively.

For problems seeking to minimize path length in $\mathbb{R}^n$, Euclidean distance is an admissible heuristic for both terms (even with motion constraints).
This \emph{informed} subset of states that may improve the current solution, $\fhatSet \supseteq \fSet$, can then be expressed in closed form in terms of the cost of the current solution, $\cbest$, as
\begin{align*}%
\fhatSet = \left\lbrace \statex \in \stateSet \;\; \middle| \;\; \norm{\xstart - \statex}{2} + \norm{\statex - \xgoal}{2} \leq \cbest \right\rbrace,
\end{align*}%
which is the general equation of an $n$-dimensional prolate hyperspheroid (i.e., a special hyperellipsoid).
The focal points are $\xstart$ and $\xgoal$, the transverse diameter is $\cbest$, and the conjugate diameters are $\sqrt{\cbest^2 - \cmin^2}$ (Fig.~\ref{fig:ellipse}).

Admissibility of $\fHat{\cdot}$ makes adding a state in $\fhatSet$ a necessary condition to improve the solution.
With the space-filling nature of \ac{RRT}, the probability of adding such a state quickly becomes the probability of sampling such a state\footnotemark{}.
\footnotetext{States may be added to $\fhatSet$ with a sample from outside the subset until it is filled to within the \ac{RRT} growth-limiting parameter, $\eta$, of its boundary.}%
Thus, the probability of improving the solution at any iteration by uniformly sampling a larger subset, $\statex^{i+1} \sim \uniform{\sampleSet},\, \sampleSet \supseteq \fhatSet$, is less than or equal to the ratio of set measures $\lebesgue{\cdot}$,
\begin{align}\label{eqn:sampleProb}%
    P\left(\cbest^{i+1}<\cbest^i\right) &\leq P\left(\statex^{i+1}\in\fSet \right)\\
        &\leq P\left(\statex^{i+1}\in\fhatSet \right) = \tfrac{\lebesgue{\fhatSet}}{\lebesgue{\sampleSet}}.\nonumber
\end{align}%
Using the volume of a prolate hyperspheroid in $\mathbb{R}^n$ gives
\begin{align}\label{eqn:ellipseProb}%
	P\left(\cbest^{i+1}<\cbest^i\right) \leq \tfrac{\cbest^i \left( \cbest^{i^2} - \cmin^2 \right)^{\tfrac{n-1}{2}} \zeta_n}{2^n\lebesgue{\sampleSet}},
\end{align}%
with $\zeta_n$ being the volume of a unit $n$-ball.

\vspace{0.66ex}\begin{rem}[Rejection sampling]\label{rem:rej}%
From \eqref{eqn:ellipseProb} it can be observed that the probability of improving a solution through uniform sampling becomes arbitrarily small for large subsets (e.g., global sampling) or as the solution approaches the theoretical minimum.
\end{rem}%

\vspace{0.66ex}\begin{rem}[Rectangular rejection sampling]\label{rem:rect}%
Let $\sampleSet$ be a hyperrectangle that tightly bounds the informed subset (i.e., the widths of each side correspond to the diameters of the prolate hyperspheroid) \cite{otte_tro13}.
From \eqref{eqn:ellipseProb}, the probability that a sample drawn uniformly from $\sampleSet$ will be in $\fhatSet$ is then $\frac{\zeta_n}{2^n}$, which decreases rapidly with $n$.
For example, with $n=6$ this gives a maximum $8\%$ probability of improving a solution at each iteration through rejection sampling regardless of the specific solution, problem, or algorithm parameters.
\end{rem}

\vspace{0.66ex}\begin{thm}[Obstacle-free linear convergence]\label{thm:converge}%
With uniform sampling of the informed subset, $\statex \sim \uniform{\fhatSet}$, the cost of the best solution, $\cbest$, converges linearly to the theoretical minimum, $\cmin$, in the absence of obstacles.  
\end{thm}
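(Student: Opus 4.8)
The plan is to show that, once a first solution of finite cost exists, there is a constant $p_n > 0$ depending only on the dimension $n$ such that at every subsequent iteration the suboptimality gap $\Delta_i := \cbest^i - \cmin$ is, with probability at least $p_n$, pushed below $\cmin + \tfrac{1}{2}\Delta_i$; a standard contraction argument then forces $\expect{\Delta_i}$ to decay geometrically, which is linear convergence.

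\textit{Geometric step.} In the absence of obstacles the heuristic is exact, $\fHat{\statex} = \fTrue{\statex} = \norm{\xstart - \statex}{2} + \norm{\statex - \xgoal}{2}$, so $\fhatSet^i$ is exactly the prolate hyperspheroid $E_i$ with foci $\xstart,\xgoal$ and transverse diameter $\cbest^i$ (and $\fhatSet^i \supseteq \fSet^i$). Fix $\alpha \in (0,1)$ (take $\alpha = \tfrac{1}{2}$) and let $E_i^{\alpha} \subseteq E_i$ be the confocal hyperspheroid of transverse diameter $\cmin + \alpha\Delta_i$; every $\statex \in E_i^{\alpha}$ satisfies $\fTrue{\statex} \leq \cmin + \alpha\Delta_i$. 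Writing $\cbest^i = \cmin(1+\epsilon)$ and using the hyperspheroid volume of \eqref{eqn:ellipseProb},
\begin{align*}
\frac{\lebesgue{E_i^{\alpha}}}{\lebesgue{E_i}} \;=\; \frac{1+\alpha\epsilon}{1+\epsilon}\; \alpha^{\frac{n-1}{2}} \left(\frac{2+\alpha\epsilon}{2+\epsilon}\right)^{\frac{n-1}{2}} \;\geq\; \alpha^{n},
\end{align*}
because each of the three factors is at least the corresponding power of $\alpha$ for every $\epsilon > 0$. Hence $p_n := \alpha^{n}$ lower-bounds the probability that $\statex^{i+1} \sim \uniform{\fhatSet^{i}}$ lands in $E_i^{\alpha}$, uniformly in $i$ and independently of the current solution cost.

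\textit{Realization step.} Restricting samples to $\fhatSet^i$ does not break asymptotic optimality, since $\fhatSet^i \supseteq \fSet^i$ always contains every state lying on a path capable of improving the incumbent; the argument of \cite{karaman_ijrr11} therefore applies, and with no obstacles the tree densifies along the $\xstart$--$\xgoal$ corridor so that after some almost-surely finite iteration the cost-to-come realized by the tree, together with the goal connection, is within an arbitrarily small additive $\delta$ of optimal throughout the (shrinking) informed set. Consequently, for all sufficiently large $i$, the event $\{\statex^{i+1} \in E_i^{\alpha}\}$ implies $\Delta_{i+1} \leq \alpha\Delta_i + \delta$, while otherwise $\Delta_{i+1} \leq \Delta_i$. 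Taking conditional expectations yields $\expect{\Delta_{i+1}} \leq \bigl(1 - (1-\alpha)p_n\bigr)\expect{\Delta_i} + \delta$, so $\expect{\Delta_i}$ contracts geometrically to an $O(\delta)$ floor; letting $\delta$ be as small as desired beyond any horizon gives $\expect{\Delta_i} \to 0$ at the geometric rate $1 - (1-\alpha)p_n < 1$.

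I expect the realization step to be the main obstacle: converting ``the sample falls in $E_i^{\alpha}$'' into ``$\cbest$ actually drops by the factor $\alpha$'' requires the tree to be dense enough along the corridor for \ac{RRTstar}'s steer-limited edges and rewiring to realize a near-straight connection, and this must be reconciled with the simultaneously shrinking $\fhatSet^i$ and vanishing connection radius; in particular, one must ensure the additive routing loss $\delta$ does not itself control the rate, which is exactly where obstacle-freeness is essential. The geometric step, by contrast, is elementary. A clean way to isolate the rate is to pass through the idealized model in which each accepted sample $\statex$ is joined to $\xstart$ and $\xgoal$ by the two (collision-free) segments: there the contraction $\Delta_{i+1} \leq \alpha\Delta_i$ on an event of probability $\geq p_n$ is exact, linear convergence is immediate, and obstacle-free \ac{RRTstar} inherits the asymptotic rate via its asymptotic optimality.
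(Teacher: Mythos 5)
Your geometric step is sound and your conclusion matches the theorem, but your route is genuinely different from the paper's, and one step needs repair. The paper's proof is much shorter: it assumes outright that the \ac{RRTstar} rewiring parameter exceeds the diameter of the informed subset (mirroring the assumption in \cite{karaman_ijrr11} that $\eta$ exceeds the diameter of the planning problem), so the new solution cost is exactly the heuristic value of the newly drawn sample, $\expect{\cbest^{i}} = \expect{\fHat{\statex^{i}}}$; it then invokes the closed-form expectation $\expect{\fHat{\statex}} = \tfrac{n\cbest^2 + \cmin^2}{(n+1)\cbest}$ for a uniform sample of a prolate hyperspheroid \cite{gammell_arxiv14} and reads off the exact rate $\mu = \left. \partial \expect{\cbest^{i}} / \partial \cbest^{i-1} \right|_{\cbest^{i-1} = \cmin} = \tfrac{n-1}{n+1}$ as the derivative at the fixed point $\cmin$. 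Your confocal-subspheroid argument replaces that exact expectation with the volume bound $P\left(\fTrue{\statex} \leq \cmin + \alpha\Delta_i\right) \geq \alpha^{n}$ (your inequality chain checks out), giving geometric decay of $\expect{\Delta_i}$ at rate $1 - (1-\alpha)\alpha^{n}$; this is more elementary, needing no external expectation formula, but yields a far weaker, non-sharp constant, whereas the paper's approach buys the explicit dimension-dependent rate $\tfrac{n-1}{n+1}$. The weak point in your write-up is the realization step: appealing to asymptotic optimality plus an additive routing loss $\delta$ attained only after an almost-surely finite random time does not rigorously produce a geometric rate (neither the $O(\delta)$ floor nor the random horizon is controlled, and asymptotic optimality by itself transfers no rate). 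The paper sidesteps this entirely by the large-rewiring-radius assumption, which is precisely your ``idealized model''; adopt that assumption explicitly, as the paper does, and your contraction argument becomes a complete, if quantitatively cruder, proof.
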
%
\begin{proof}%
The heuristic value of a state is equal to the transverse diameter of a prolate hyperspheroid that passes through the state and has focal points at $\xstart$ and $\xgoal$.
With uniform sampling, the expectation is then \cite{gammell_arxiv14}
\begin{align} \label{eqn:expect}%
    \expect{\fHat{\statex}} = \tfrac{n\cbest^2 + \cmin^2}{\left(n+1\right)\cbest}.
\end{align}%
We assume that the \ac{RRTstar} rewiring parameter is greater than the diameter of the informed subset, similarly to how the proof of the asymptotic optimality of \ac{RRTstar} assumes that $\eta$ is greater than the diameter of the planning problem \cite{karaman_ijrr11}.
The expectation of the solution cost, $\cbest^{i}$, is then the expectation of the heuristic cost of a sample drawn from a prolate hyperspheroid of diameter $\cbest^{i-1}$, i.e., $\expect{\cbest^{i}} = \expect{\fHat{\statex^i}}$.
From \eqref{eqn:expect} it follows that the solution cost converges linearly with a rate, $\mu$, that depends only on the state dimension \cite{gammell_arxiv14},
\setlength \belowdisplayskip {-2ex} %
\begin{align*}%
    \mu =  \left. \tfrac{\partial \expect{\cbest^{i}}}{\partial \cbest^{i-1}} \right|_{\cbest^{i-1} = \cmin} = \tfrac{n - 1}{n+1}.
\end{align*}%
\setlength \belowdisplayskip {1ex plus0pt minus1pt} %
\end{proof}%
While the obstacle-free assumption is impractical, Thm.~\ref{thm:converge} illustrates the fundamental effectiveness of direct informed sampling and provides possible insight for future work.

\section{Direct Sampling of an Ellipsoidal Subset}\label{sec:sample}
Uniformly distributed samples in a hyperellipsoid, $\xellipse \sim \uniform{\ellipseSet}$, can be generated by transforming uniformly distributed samples from the unit $n$-ball, $\xball \sim \uniform{\ballSet}$,
\begin{align*}%
	\xellipse = \mathbf{L} \xball + \xcentre,
\end{align*}%
where $\xcentre = \left(\xFirstFoci + \xSecondFoci\right)/2$ is the centre of the hyperellipsoid in terms of its two focal points, $\xFirstFoci$ and $\xSecondFoci$, and $\ballSet = \left\lbrace \statex \in \stateSet \;\; \middle| \;\; \norm{\statex}{2} \leq 1 \right\rbrace$ \cite{sun_fusion02}.

This transformation can be calculated by Cholesky decomposition of the hyperellipsoid matrix, $\mathbf{S} \in \mathbb{R}^{n \times n}$,
\setlength \belowdisplayskip{-0.5ex plus0pt minus0pt}%
\begin{align*}%
	\mathbf{L}\mathbf{L}^T \equiv \mathbf{S},
\end{align*}%
\setlength \belowdisplayskip{1ex plus0pt minus1pt}%
where
\setlength \abovedisplayskip{-0.5ex plus0pt minus0pt}%
\begin{align*}%
	\left( \statex - \xcentre \right)^T\mathbf{S}\left( \statex - \xcentre \right) = 1,
\end{align*}%
\setlength \abovedisplayskip{1ex plus0pt minus1pt}%
with $\mathbf{S}$ having eigenvectors corresponding to the axes of the hyperellipsoid, $\left\lbrace \axis_i \right\rbrace$, and eigenvalues corresponding to the squares of its radii, $\left\lbrace \radius_i^2 \right\rbrace$.
The transformation, $\mathbf{L}$, maintains the uniform distribution in $\ellipseSet$ \cite{gammell_arxiv14b}.

For prolate hyperspheroids, such as $\fhatSet$, the transformation can be calculated from just the transverse axis and the radii.
The hyperellipsoid matrix in a coordinate system aligned with the transverse axis is the diagonal matrix
\begin{align*}%
	\mathbf{S} = \diag \left\lbrace \tfrac{\cbest^2}{4}, \tfrac{\cbest^2 - \cmin^2}{4}, \ldots, \tfrac{\cbest^2 - \cmin^2}{4} \right\rbrace,
\end{align*}%
with a resulting decomposition of
\begin{align}\label{eqn:finalL}%
	\mathbf{L} = \diag \left\lbrace \tfrac{\cbest}{2}, \tfrac{\sqrt{\cbest^2 - \cmin^2}}{2}, \ldots, \tfrac{\sqrt{\cbest^2 - \cmin^2}}{2} \right\rbrace,
\end{align}%
where $\diag\left\lbrace\cdot\right\rbrace$ denotes a diagonal matrix.

The rotation from the hyperellipsoid frame to the world frame, $\mathbf{C} \in SO\left( n \right)$, can be solved directly as a general Wahba problem \cite{wahba_siam65}.
It has been shown that a valid solution can be found even when the problem is underspecified \cite{ruiter_jgcs13}.
The rotation matrix is given by
\begin{align} \label{eqn:svd}%
	\mathbf{C} = \mathbf{U}\diag\left\lbrace 1, \ldots, 1, \det\left(\mathbf{U}\right) \det\left(\mathbf{V}\right) \right\rbrace \mathbf{V}^{T},
\end{align}%
where $\det\left(\cdot\right)$ is the matrix determinant and $\mathbf{U} \in \mathbb{R}^{n\times n}$ and $\mathbf{V} \in \mathbb{R}^{n\times n}$ are unitary matrices such that $\mathbf{U}\boldsymbol{\Sigma}\mathbf{V}^T \equiv \mathbf{M}$ via singular value decomposition.
The matrix $\mathbf{M}$ is given by the outer product of the transverse axis in the world frame, $\axis_1$, and the first column of the identity matrix, $\mathbf{1}_1$,
\setlength \belowdisplayskip{-0.5ex plus0pt minus0pt}%
\begin{align*}%
	\mathbf{M} = \axis_1\mathbf{1}_1^T,
\end{align*}%
\setlength \belowdisplayskip{1ex plus0pt minus1pt}%
where
\setlength \abovedisplayskip{-0.5ex plus0pt minus0pt}%
\begin{align*}%
	\axis_{1} = \left( \xgoal - \xstart \right)/\norm{\xgoal - \xstart}{2}.
\end{align*}%
\setlength \abovedisplayskip{1ex plus0pt minus1pt}%

A state uniformly distributed in the informed subset, $\xfhat \sim \uniform{\fhatSet}$, can thus be calculated from a sample drawn uniformly from a unit $n$-ball, $\xball \sim \uniform{\ballSet}$, through a transformation \eqref{eqn:finalL}, rotation \eqref{eqn:svd}, and translation,
\begin{align} \label{eqn:sample}
	\xfhat = \mathbf{C}\mathbf{L}\xball + \xcentre.
\end{align}%
This procedure is presented algorithmically in Alg.~\ref{algo:sample}.

\begin{algorithm}[tb]%
\algorithmStyle
	\caption{Informed \acs{RRTstar}$\left(\xstart, \xgoal \right)$}\label{algo:body}
	$\vertexSet \gets \left\lbrace \xstart \right\rbrace$\;
	$\edgeSet \gets \emptyset$\;
	\newAlgoLine{$\solnSet \gets \emptyset$\;} \label{algo:body:init}
	$\treeGraph = \left( \vertexSet, \edgeSet \right)$\;
	\For{$\mathrm{iteration} = 1 \ldots N$}
	{
		\newAlgoLine{$\cbest \gets \min_{\xsoln\in\solnSet}\left\lbrace \mathtt{Cost}\left( \xsoln \right)\right\rbrace$\;} \label{algo:body:best}
		\newAlgoLine{$\xrand \gets \mathtt{Sample}\left(\xstart, \xgoal, \cbest \right)$\;} \label{algo:body:sample}
		$\xnearest \gets \mathtt{Nearest}\left(\treeGraph, \xrand \right)$\;
		$\xnew \gets \mathtt{Steer}\left(\xnearest, \xrand\right)$\;
		\If{$\mathtt{CollisionFree}\left(\xnearest, \xnew\right)$}
		{
			$\vertexSet \gets \cup \left\lbrace \xnew \right\rbrace$\;
			$\nearSet \gets \mathtt{Near}\left(\treeGraph, \xnew, r_{\mathrm{RRT}^*} \right)$\;
			$\xmin \gets \xnearest$\;
			$\cmin \gets \mathtt{Cost}\left( \xmin \right) + c\cdot\mathtt{Line}\left( \xnearest, \xnew \right)$\;
			\For{$\forall \xnear \in \nearSet$}	
			{
				$\cnew \gets \mathtt{Cost}\left( \xnear \right) + c\cdot\mathtt{Line}\left( \xnear, \xnew \right)$\;
				\If{$\cnew < \cmin$}
				{
					\If{$\mathtt{CollisionFree}\left( \xnear, \xnew \right)$}
					{
						$\xmin \gets \xnear$\;
						$c_{\rm min} \gets c_{\rm new}$\;
					}
				}
			}
			$\edgeSet \gets \edgeSet \cup \left\lbrace\left(\xmin, \xnew \right)\right\rbrace$\;
			
			\For{$\forall \xnear \in \nearSet$}
			{
				$\cnear \gets \mathtt{Cost}\left(\xnear \right)$\;
				$\cnew \gets \mathtt{Cost}\left( \xnew \right) + c\cdot\mathtt{Line}\left( \xnew, \xnear \right)$\;
				\If{$\cnew < \cnear$}
				{
					\If{$\mathtt{CollisionFree}\left( \xnew, \xnear \right)$}
					{
						$\xparent \gets \mathtt{Parent}\left(\xnear \right)$\;
						$\edgeSet \gets \edgeSet \setminus \left\lbrace \left(\xparent, \xnear \right) \right\rbrace$\;
						$\edgeSet \gets \edgeSet \cup \left\lbrace \left( \xnew, \xnear\right)\right\rbrace$\;
					}
				}
			}
			
			\newAlgoLine
			{
			\If{ $\mathtt{InGoalRegion}\left( \xnew \right)$ \label{algo:body:goalStart}}
			{
				$\solnSet \gets \solnSet \cup \left\lbrace \xnew \right\rbrace$\;\label{algo:body:goalEnd}
			}
			}
		}
	}
	\Return{$\treeGraph$}\;	
\end{algorithm}%
\section{Informed \acs{RRTstar}}\label{sec:algorithm}%
An example algorithm using direct informed sampling, Informed \acs{RRTstar}, is presented in Algs.~\ref{algo:body} and \ref{algo:sample}.
It is identical to \ac{RRTstar} as presented in \cite{karaman_ijrr11}, with the addition of lines \ref{algo:body:init}, \ref{algo:body:best}, \ref{algo:body:sample}, \ref{algo:body:goalStart}, and \ref{algo:body:goalEnd}.
Like \ac{RRTstar}, it searches for the optimal path, $\bestPath$, to a planning problem by incrementally building a tree in state space, $\treeGraph = \left( \vertexSet, \edgeSet \right)$,  consisting of a set of vertices, $\vertexSet \subseteq \freeSet$, and edges, $\edgeSet \subseteq \freeSet \times \freeSet$.
New vertices are added by growing the graph in free space towards randomly selected states.
The graph is rewired with each new vertex such that the cost of the nearby vertices are minimized.

The algorithm differs from \ac{RRTstar} in that once a solution is found, it focuses the search on the part of the planning problem that can improve the solution.
It does this through direct sampling of the ellipsoidal heuristic.
As solutions are found (line \ref{algo:body:goalStart}), Informed \acs{RRTstar} adds them to a list of possible solutions (line \ref{algo:body:goalEnd}).
It uses the minimum of this list (line \ref{algo:body:best}) to calculate and sample $\fhatSet$ directly (line \ref{algo:body:sample}).
As is conventional, we take the minimum of an empty set to be infinity.
The new subfunctions are described below, while descriptions of subfunctions common to \ac{RRTstar} can be found in \cite{karaman_ijrr11}:

\texttt{Sample}: Given two poses, $\xfrom,\,\xto \in \freeSet$ and a maximum heuristic value, $\cmax \in \mathbb{R}$, the function $\mathtt{Sample}\left( \xfrom, \xto, \cmax \right)$ returns \ac{iid} samples from the state space, $\xnew \in \stateSet$, such that the cost of an optimal path between $\xfrom$ and $\xto$ that is constrained to go through $\xnew$ is less than $\cmax$ as described in Section~\ref{sec:ellipse} and Alg.~\ref{algo:sample}. In most planning problems, $\xfrom \equiv \xstart$, $\xto \equiv \xgoal$, and lines \ref{algo:sample:start} to \ref{algo:sample:end} of Alg.~\ref{algo:sample} can be calculated once at the start of the problem.

\texttt{InGoalRegion}: Given a pose, $\statex \in \freeSet$, the function $\mathtt{InGoalRegion}\left(\statex \right)$ returns $\mathtt{True}$ if and only if the state is in the goal region, $\goalSet$, as defined by the planning problem, otherwise it returns $\mathtt{False}$.
One common goal region is a ball of radius $\radius_{\rm goal}$ centred about the goal, i.e.,
\begin{align*}%
	\goalSet = \left\lbrace \statex \in \freeSet \;\; \middle| \;\; \norm{\statex - \xgoal}{2} \leq \radius_{\rm goal} \right\rbrace.
\end{align*}%

\texttt{RotationToWorldFrame}: Given two poses as the focal points of a hyperellipsoid, $\xfrom,\,\xto \in \stateSet$, the function $\mathtt{RotationToWorldFrame}\left(\xfrom, \xto \right)$ returns the rotation matrix, $\mathbf{C} \in SO\left( n \right)$, from the hyperellipsoid-aligned frame to the world frame as per \eqref{eqn:svd}.
As previously discussed, in most planning problems this rotation matrix only needs to be calculated at the beginning of the problem.

\texttt{SampleUnitNBall}: The function, $\mathtt{SampleUnitNBall}$ returns a uniform sample from the volume of an $n$-ball of unit radius centred at the origin, i.e. $\xball \sim \uniform{\ballSet}$.

\begin{algorithm}[tb]%
\algorithmStyle
	\caption{$\mathtt{Sample}\left(\xstart, \xgoal, \cmax \right)$}\label{algo:sample}
	\If{$\cmax < \infty$}
	{
		$\cmin \gets \norm{\xgoal - \xstart}{2}$\; \label{algo:sample:start}
		$\xcentre \gets \left(\xstart + \xgoal\right)/2$\;
		$\mathbf{C} \gets \mathtt{RotationToWorldFrame\left(\xstart, \xgoal \right)}$\; \label{algo:sample:end}
		$\radius_{1} \gets \cmax/2$\;
		$\left\lbrace \radius_i\right\rbrace_{i = 2,\ldots,n} \gets \left(\sqrt{\cmax^2 - \cmin^2}\right)/2$\;
		$\mathbf{L} \gets \diag\left\lbrace\radius_1, \radius_2, \ldots, \radius_n\right\rbrace$\;
		$\xball \gets \mathtt{SampleUnitNBall}$\;
		$\xrand \gets \left( \mathbf{C}\mathbf{L}\xball + \xcentre\right) \cap \stateSet$\;
	}
	\Else
	{
		$\xrand \sim \uniform{\stateSet}$\;
	}
	\Return{$\xrand$};
\end{algorithm}%
\subsection{Calculating the Rewiring Radius}\label{sec:algorithm:rewire}
At each iteration, the rewiring radius, $r_{\mathrm{RRT}^*}$, must be large enough to guarantee almost-sure asymptotic convergence while being small enough to only generate a tractable number of rewiring candidates.
Karaman and Frazzoli \cite{karaman_ijrr11} present a lower-bound for this rewiring radius in terms of the measure of the problem space and the number of vertices in the graph.
Their expression assumes a uniform distribution of samples of a unit square.
As Informed \acs{RRTstar} uniformly samples the \emph{subset} of the planning problem that can improve the solution, a rewiring radius can be calculated from the measure of this informed subset and the related vertices inside it.
This updated radius reduces the amount of rewiring necessary and further improves the performance of Informed \ac{RRTstar}.
Ongoing work is focused on finding the exact form of this expression, but the radius provided by \cite{karaman_ijrr11} appears appropriate.
There also exists a $k$-nearest neighbour version of this expression.

\section{Simulations}\label{sec:sim}
Informed \acs{RRTstar} was compared to \ac{RRTstar} on a variety of simple planning problems (Figs.~\ref{fig:probDefn} to \ref{fig:gapExample}) and randomly generated worlds (e.g., Figs.~\ref{fig:randomWorld2},~\ref{fig:randomWorld}).
Simple problems were used to test specific challenges, while the random worlds were used to provide more challenging problems in a variety of state dimensions.

Fig.~\ref{fig:probDefn}(a) was used to examine the effects of the problem range and the ability to find paths within a specified tolerance of the true optimum, with the width of the obstacle, $w$, selected randomly.
Fig.~\ref{fig:probDefn}(b) was used to demonstrate Informed \acs{RRTstar}'s ability to find topologically distinct solutions, with the position of the narrow passage, $y_g$, selected randomly.
For these toy problems, experiments were ended when the planner found a solution cost within the target tolerance of the optimum.
Random worlds, as in Fig.~\ref{fig:randomWorld}, were used to test Informed \acs{RRTstar} on more complicated problems and in higher state dimensions by giving the algorithms $60$ seconds to improve their initial solutions.
For each variation of every experiment, $100$ different runs of both \ac{RRTstar} and Informed \acs{RRTstar} were performed with a common pseudo-random seed and map.

The algorithms share the same unoptimized code, allowing for the comparison of relative computational time\footnotemark{}.
While further optimization would reduce the effect of graph size on the computational cost and reduce the difference between the two planners, as they have approximately the same cost per iteration it will not effect the order.
To minimize the effects of the steer parameter on our results, we set it equal to the \ac{RRTstar} rewiring radius at each iteration calculated from $\gamma_{\rm RRT} = 1.1\gamma_{\rm RRT}^*$, a choice we found improved the performance of \ac{RRTstar}.
As discussed in Section~\ref{sec:algorithm:rewire}, for Informed \acs{RRTstar} we calculated the rewiring radius for the subproblem defined by the current solution using the expression in \cite{karaman_ijrr11}.
\footnotetext{Experiments were run in Ubuntu 12.04 on an Intel i5-2500K CPU with 8GB of RAM.}

\begin{figure}[tb]%
	\centering
	\includegraphics[scale=1]{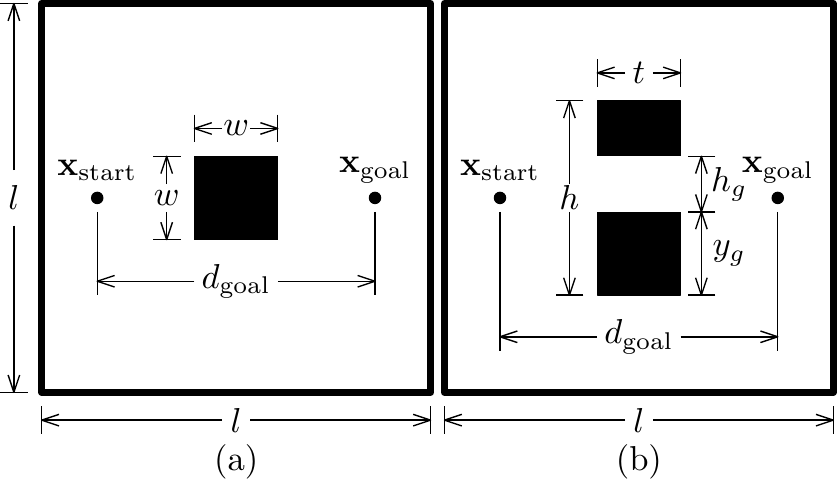}
	\caption{The two planning problems used in Section~\ref{sec:sim}. The width of the obstacle, $w$, and the location of the gap, $y_g$, were selected randomly for each experimental run.}
	\label{fig:probDefn}
\end{figure}%

Experiments varying the width of the problem range, $l$, while keeping a fixed distance between the start and goal show that Informed \acs{RRTstar} finds a suitable solution in approximately the same time regardless of the relative size of the problem (Fig.~\ref{fig:mapStudy}).
As a result of considering only the informed subset once an initial solution is found, the size of the search space is independent of the planning range (Fig.~\ref{fig:convergeExample}).
In contrast, the time needed by \ac{RRTstar} to find a similar solution increases as the problem range grows as proportionately more time is spent searching states that cannot improve the solution (Fig.~\ref{fig:mapStudy}).

Experiments varying the target solution cost show that Informed \acs{RRTstar} is capable of finding near-optimal solutions in significantly fewer iterations than \ac{RRTstar} (Fig.~\ref{fig:convergenceStudy}).
The direct sampling of the informed subset increases density around the optimal solution faster than global sampling and therefore increases the probability of improving the solution and further focusing the search.
In contrast, \ac{RRTstar} has uniform density across the entire planning domain and improving the solution actually \emph{decreases} the probability of finding further improvements  (Fig.~\ref{fig:convergeExample}).

Experiments varying the height of $h_g$ in Fig.~\ref{fig:probDefn}(b) demonstrate that Informed \acs{RRTstar} finds difficult passages that improve the current solution, regardless of their homotopy class, quicker than \ac{RRTstar} (Fig.~\ref{fig:gapStudy}).
Once again, the result of considering only the informed subset is an increased state density in the region of the planning problem that includes the optimal solution.
Compared to global sampling, this increases the probability of sampling within difficult passages, such as narrow gaps between obstacles, decreasing the time necessary to find such solutions (Fig.~\ref{fig:gapExample}).

Finally, experiments on random worlds demonstrate that the improvements of Informed \acs{RRTstar} apply to a wide range of planning problems and state dimensions (Fig.~\ref{fig:dimensionStudy}).

\begin{figure}[t]%
	\centering
	\includegraphics[width=\columnwidth]{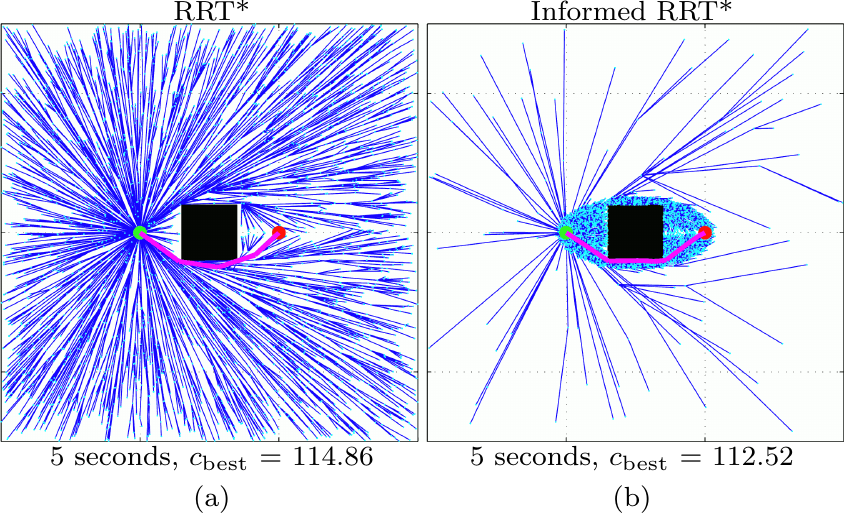}
	\caption{An example of Fig.~\ref{fig:probDefn}(a) after $5$ seconds for a problem with an optimal solution cost of $112.01$. Note that the presence of an obstacle provides a lower bound on the size of the ellipsoidal subset but that Informed \acs{RRTstar} still searches a significantly reduced domain than \acs{RRTstar}, increasing both the convergence rate and quality of final solution.}
	\label{fig:convergeExample}
\end{figure}%
\begin{figure}[t]%
	\centering
	\includegraphics[width=\columnwidth]{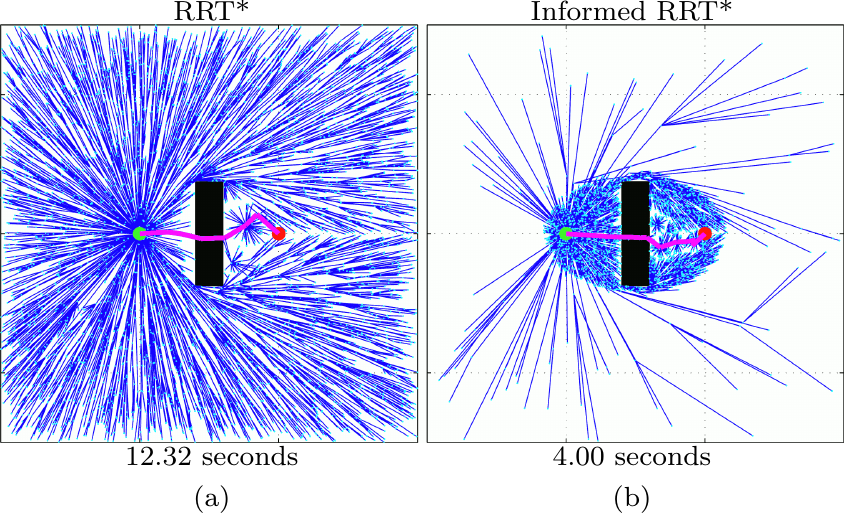}
	\caption{An example of Fig.~\ref{fig:probDefn}(b) for a $3\%$ off-centre gap. By focusing the search space on the subset of states that may improve an initial solution flanking the obstacle, Informed \acs{RRTstar} is able to find a path through the narrow opening in $4.00$ seconds while \acs{RRTstar} requires $12.32$ seconds.}
	\label{fig:gapExample}
\end{figure}%
\begin{figure}[t]%
	\centering
	\includegraphics[width=\columnwidth]{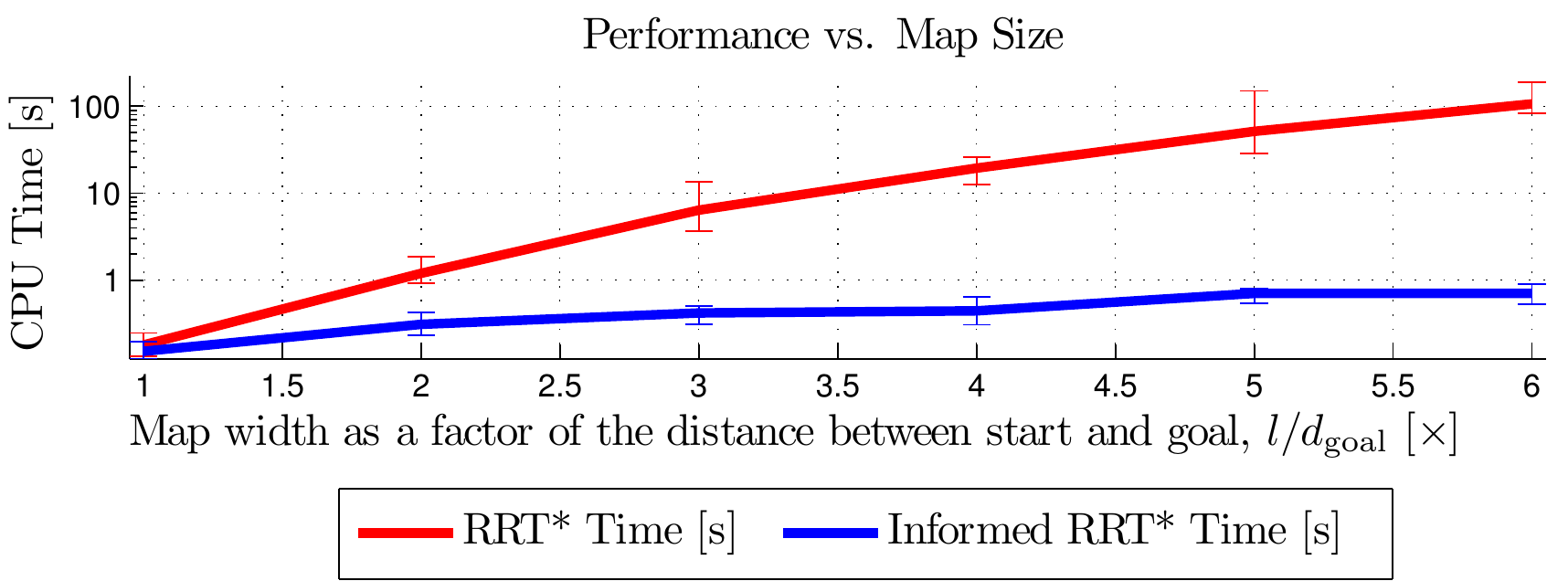}
	\caption{The median computational time needed by \acs{RRTstar} and Informed \acs{RRTstar} to find a path within 2\% of the optimal cost in $\mathbb{R}^2$ for various map widths, $l$, for the problem in Fig.~\ref{fig:probDefn}(a). Error bars denote a nonparametric $95\%$ confidence interval for the median number of iterations calculated from $100$ independent runs.}
	\label{fig:mapStudy}
\end{figure}%
\begin{figure}[t]%
	\centering
	\includegraphics[width=\columnwidth]{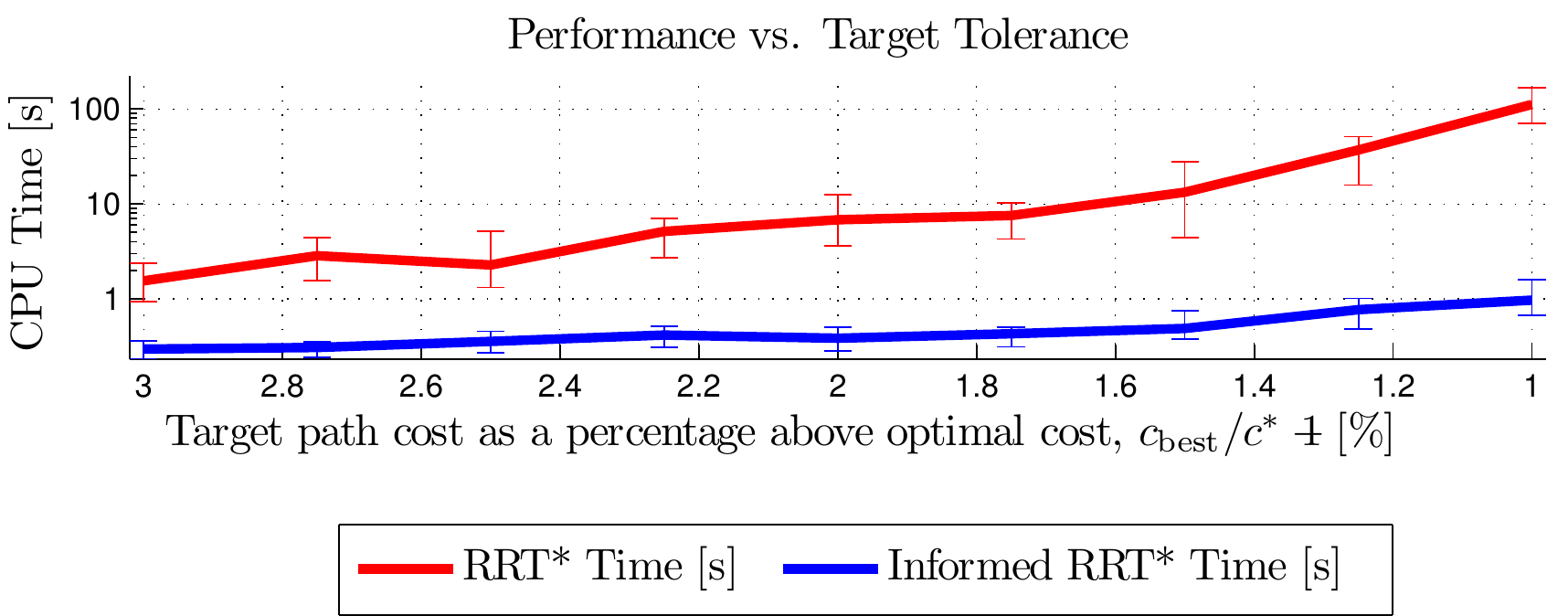}
	\caption{The median computational time needed by \acs{RRTstar} and Informed \acs{RRTstar} to find a path within the specified tolerance of the optimal cost, $\cideal$, in $\mathbb{R}^2$ for the problem in Fig.~\ref{fig:probDefn}(a). Error bars denote a nonparametric $95\%$ confidence interval for the median number of iterations calculated from $100$ independent runs.}
	\label{fig:convergenceStudy}
\end{figure}%
\begin{figure}[t]%
	\centering
	\includegraphics[width=\columnwidth]{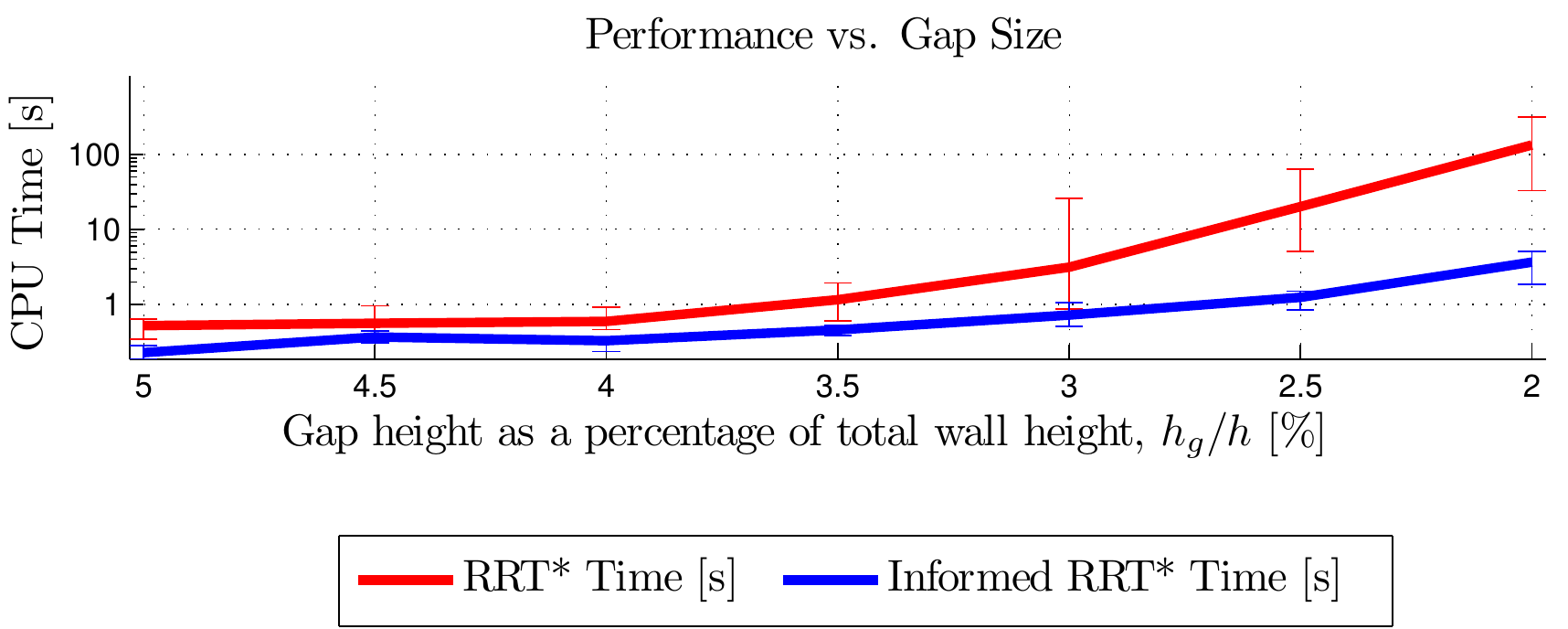}
	\caption{The median computational time needed by \acs{RRTstar} and Informed \acs{RRTstar} to find a path cheaper than flanking the obstacle for various gap ratios, $h_g/h$ for the problem defined in Fig.~\ref{fig:probDefn}(b). Error bars denote a nonparametric $95\%$ confidence interval for the median number of iterations calculated from $100$ independent runs.}
	\label{fig:gapStudy}
\end{figure}%
\begin{figure}[t]%
	\centering
	\includegraphics[width=\columnwidth]{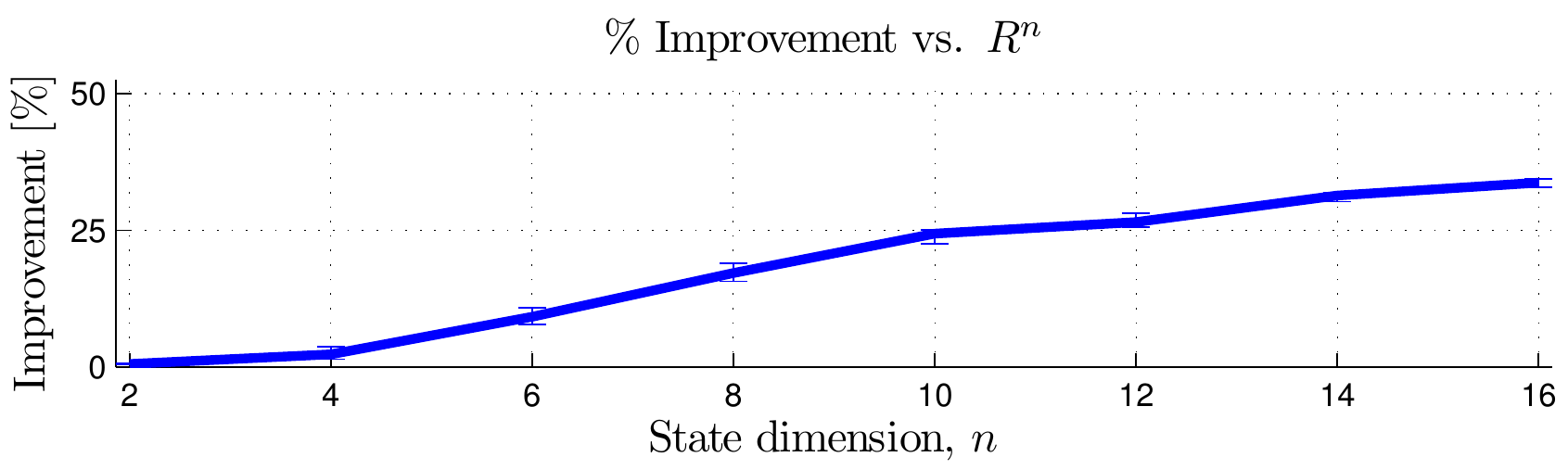}
	\caption{The median performance of \acs{RRTstar} and Informed \acs{RRTstar} $60$ seconds after finding an initial solution for random worlds (e.g., Figs.~\ref{fig:randomWorld2},~\ref{fig:randomWorld}) in $\mathbb{R}^n$. Plotted as the relative difference in cost, $(\cbest^{\mbox{\tiny RRT*}} - \cbest^{\mbox{\tiny Informed RRT*}})/(\cbest^{\mbox{\tiny RRT*}})$. Error bars denote a nonparametric $95\%$ confidence interval for the median number of iterations calculated from $100$ independent runs.}
	\label{fig:dimensionStudy}
\end{figure}%
\section{Discussion \& Conclusion}\label{sec:end}
In this paper, we discuss that a necessary condition for \ac{RRTstar} algorithms to improve a solution is the addition of a state from a subset of the planning problem, $\fSet \subseteq \stateSet$.
For problems seeking to minimize path length in $\mathbb{R}^n$, this subset can be estimated, $\fhatSet \supseteq \fSet$, by a prolate hyperspheroid (a special type of hyperellipsoid) with the initial and goal states as focal points.
It is shown that the probability of adding a new state from this subset through rejection sampling of a larger set becomes arbitrarily small as the dimension of the problem increases, the size of the sampled set increases, or the solution approaches the theoretical minimum.
A simple method to sample $\fhatSet$ directly is presented that allows for the creation of informed-sampling planners, such as Informed \acs{RRTstar}.
It is shown that Informed \acs{RRTstar} outperforms \ac{RRTstar} in the ability to find near-optimal solutions in finite time regardless of state dimension without requiring any assumptions about the optimal homotopy class.

Informed \acs{RRTstar} uses heuristics to shrink the planning problem to subsets of the original domain.
This makes it inherently dependent on the current solution cost, as it cannot focus the search when the associated prolate hyperspheroid is larger than the planning problem itself.
Similarly, it can only shrink the subset down to the lower bound defined by the optimal solution.
We are currently investigating techniques to focus the search without requiring an initial solution.
These techniques, such as \ac{BITstar} \cite{gammell_arxiv14c}, incrementally \emph{increase} the search subset.
By doing so, they prioritize the initial search of low-cost solutions.

An \ac{OMPL} implementation of Informed \acs{RRTstar} is described at \href{http://asrl.utias.utoronto.ca/code}{\footnotesize\url{http://asrl.utias.utoronto.ca/code}}.

\section*{Acknowledgment}%
This research was funded by contributions from the \ac{NSERC} through the \ac{NCFRN}, the Ontario Ministry of Research and Innovation's Early Researcher Award Program, and the \ac{ONR} Young Investigator Program.
\end{spacing}%
\end{document}